%% file: main.tex
\documentclass[10pt]{article} 
\usepackage[preprint]{tmlr}

\usepackage{url}
\usepackage{ifthen}
\usepackage{babel}
\usepackage{enumitem}
\usepackage{interval}
\usepackage{float}
\usepackage{dsfont}

\usepackage{xcolor}
\usepackage[
  colorlinks = true,
  linkcolor = blue,
  urlcolor  = blue,
  citecolor = blue,
  anchorcolor = blue,
]{hyperref}

\usepackage{amsmath}
\usepackage{amsthm}
\usepackage[most]{tcolorbox}

\input{math_commands.tex}

\newtheorem{definition}{Definition}
\newtheorem{theorem}{Theorem}
\newtheorem{lemma}{Lemma}[section]

\newcommand{\Lemma}[1]{Lemma~\ref{lem:#1}}

\newcommand{\Theorem}[1]{Theorem~\ref{thm:#1}}

\renewcommand{\P}{\mathrm{P}_{\!c}}

\newcommand{\Et}[1]{\mathbb{E}_t[#1]}

\newcommand{\boxsep}{2pt}

\title{Precise Convergence Speed of Clipped SGD}

\author{\name David A. R. Robin \email david.a.r.robin@gmail.com \\
      \addr LAMSADE, Université Paris-Dauphine, PSL Research University
      }

\def\month{MM}  
\def\year{YYYY} 
\def\openreview{\url{https://openreview.net/forum?id=XXXX}} 

\begin{document}

\maketitle

\begin{abstract}
  We present a tightened convergence analysis of clipped gradient descent
  on $(L_0, L_1)$-smooth functions, with quantitative constants.
  Building on the ideas of \citet{koloskova2023revisiting},
  we refactor several case disjunctions to reveal the central role of a
  control of the bias derived from fundamental properties of $\ell_2$-projection,
  simplifying proofs.
  We also extend the domain of validity
  from $\eta \leq 1 / (9 \beta)$ to $\eta < 1 /\beta$ where $\beta = L_0 + c L_1$ for clipping constant $c$,
  which matches the more traditional analysis of smooth functions.
  We strengthen the convergence criterion from
  $\left( \min_{t < T} \mathbb{E}[\lVert \nabla f(x_t) \rVert_2] \right)$
  to $\left( \frac{1}{T} \sum_{t < T} \mathbb{E}[\lVert \nabla f(x_t) \rVert_2] \right)$ with
  matching speed, and lower the final achievable loss from $\mathcal{O}(\min(\sigma^2/c, \sigma))$
  to the more precise $6 \min(\sigma^2 /c, 3 \sigma)$.
\end{abstract}

We consider the class of $(L_0, L_1)$-smooth functions, which was introduced as a relaxation
of the standard smoothness assumption (i.e. Lipschitz gradient)
in \citet[Def 1: $\lVert \nabla^2 f(x) \rVert \leq L_0 + L_1 \, \lVert \nabla f(x) \rVert_2$]{zhang2019gradient}
to model the decrease of the Hessian spectral norm
observed during training. We use the slightly relaxed version proposed
in \citet[Remark 2.3]{zhang2020improved}, matching the analysis of \citet{koloskova2023revisiting}.

\begin{tcolorbox}[boxrule=0pt,standard jigsaw, opacityback=0, frame hidden,sharp corners,enhanced,borderline west={1pt}{0pt}{gray},left=6pt,right=0pt,top=0pt,bottom=0pt,boxsep=\boxsep]
\begin{definition}
  A function $f : \mathbb{R}^d \to \mathbb{R}$ is $(L_0, L_1)$-smooth if
  for all $(x,y) \in \mathbb{R}^d \times \mathbb{R}^d$, it holds
  $$ \lVert x - y \rVert_2 \leq 1 / L_1 \quad \Rightarrow \quad
  \lVert \nabla f(x) - \nabla f(y) \rVert_2 \leq \left( L_0 + \lVert \nabla f(x) \rVert_2 \, L_1\right) \cdot \lVert x - y \rVert_2 $$
\end{definition}
\end{tcolorbox}

We study the Clip-SGD algorithm, which uses unbiased gradient estimates of finite variance
and clips them to ensure an $\ell_2$-norm below $c \in \mathbb{R}_+^*$,
which we write using the following projection operator $\P : \mathbb{R}^d \to \mathbb{R}^d$.

\begin{tcolorbox}[boxrule=0pt,standard jigsaw, opacityback=0, frame hidden,sharp corners,enhanced,borderline west={1pt}{0pt}{gray},left=6pt,right=0pt,top=0pt,bottom=0pt,boxsep=\boxsep]
\begin{definition}
  Any $c \in \mathbb{R}_+^*$ defines  $\P : v \mapsto \min(1 , c /\lVert v \rVert_2) \cdot v$
  the projection to $\{ x \in \mathbb{R}^d \mid \lVert x \rVert_2 \leq c \}$.
\end{definition}
\end{tcolorbox}

\begin{tcolorbox}[boxrule=0pt,standard jigsaw, opacityback=0, frame hidden,sharp corners,enhanced,borderline west={1pt}{0pt}{gray},left=6pt,right=0pt,top=0pt,bottom=0pt,boxsep=\boxsep]
\begin{definition}
  Clip-SGD for clipping constant $c \in \mathbb{R}_+^*$ and step-size $\eta \in \mathbb{R}_+^*$ is a sequence
  $$ x_{t+1} = x_t - \eta \cdot \P(\nabla f_\xi(x_t)) $$
  where $\mathbb{E}_\xi[ \nabla f_\xi(x) ] = \nabla f(x)$
  and $\mathbb{E}_\xi[\lVert \nabla f_\xi(x) - \nabla f(x) \rVert_2^2] \leq \sigma^2$
  for any $x \in \mathcal{X}$.
\end{definition}
\end{tcolorbox}

The convergence analysis rests on the following descent property of Clip-SGD.
Intuitively, the descent at every step is $\min(\lVert \nabla f(x_t) \rVert_2^2, c \lVert \nabla f(x_t) \rVert_2)$
which is the standard squared gradient norm when $\lVert \nabla f(x_t) \rVert_2 \leq c$,
and otherwise corresponds to a smaller unsquared gradient norm, inducing a slowdown
of the decrease when the clipping constant $c$ is too small, as expected.
Note that in the essentially-unclipped regime $c \to +\infty$, we recover (up to a constant $1/2$) the descent
condition $- \eta\, (1 - \frac{1}{2} \eta \beta) \lVert \nabla f(x_t) \rVert_2^2 + \frac{1}{2} \eta^2 \beta \sigma^2$
of unclipped SGD,
as in \citet[Lemma 4.4]{bottou2018optimization} for instance.
Our proof refactors the case disjunction arguments of
\citet[Sections C.2 and C.3]{koloskova2023revisiting}
to outline the fundamental properties of $\ell_2$-projections used.

\begin{tcolorbox}[boxrule=0pt,standard jigsaw, opacityback=0, frame hidden,sharp corners,enhanced,borderline west={1pt}{0pt}{gray},left=6pt,right=0pt,top=0pt,bottom=0pt,boxsep=\boxsep]
\begin{theorem}[Descent condition of Clip-SGD]\label{thm:descent}
  Let $f : \mathbb{R}^d \to \mathbb{R}$ be an $(L_0, L_1)$-smooth function, and
  let $(x_t)_t$ be Clip-SGD iterates on $f$ with
  clipping constant $c \in \mathbb{R}_+^*$, stepsize $\eta \in \mathbb{R}_+^*$, and variance $\sigma^2 \in \mathbb{R}_+$.

  Define $\bar\beta = L_0 + c L_1$. If $\eta \bar\beta \leq 1$, then it holds for all $t \in \mathbb{N}$ that
  $$ \mathbb{E}[f(x_{t+1}) \mid x_t] - f(x_t)
  \leq -\frac{\eta}{2} \left(1 - \eta \bar\beta \right) \alpha_t \lVert \nabla f(x_t) \rVert_2^2 + \frac{\eta^2 \bar\beta}{2} \min(\sigma^2, c^2) + \eta \kappa $$
  where $\alpha_t = \min\left(1, c / \lVert \nabla f(x_t) \rVert_2\right)$ and $\varepsilon_0 = 4 \min(\sigma, \sigma^2/c)$
  and $\kappa = \varepsilon_0 \min(\varepsilon_0 / 8, 2 c) \leq \frac{1}{8} \varepsilon_0^2$.
\end{theorem}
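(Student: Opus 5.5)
The plan is to combine a local descent lemma, valid because the clipped step is short, with a bias bound for the clipped gradient. I write $g_t = \nabla f(x_t)$, $\tilde g_t = \P(\nabla f_\xi(x_t))$ and $\mathbb{E}_t$ for conditional expectation given $x_t$.

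\textbf{Step 1: local descent inequality.} Since $\lVert x_{t+1}-x_t\rVert_2 = \eta\lVert\tilde g_t\rVert_2 \le \eta c$, and $\eta c \le c/\bar\beta \le 1/L_1$, the $(L_0,L_1)$-smoothness definition applies along the whole segment $[x_t,x_{t+1}]$. The subtlety is that the constant involves $\lVert\nabla f(x_t)\rVert_2$ and not $c$. Integrating the gradient along the segment gives
$$ f(x_{t+1}) \le f(x_t) - \eta\langle g_t,\tilde g_t\rangle + \tfrac{\eta^2}{2}(L_0+L_1\lVert g_t\rVert_2)\lVert\tilde g_t\rVert_2^2 . $$
When $\lVert g_t\rVert_2\le c$, the curvature factor is at most $\bar\beta$.

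When $\lVert g_t\rVert_2 > c$, I would not use the curvature factor directly; I would absorb it instead. Bound $\lVert\tilde g_t\rVert_2^2 \le c\lVert\tilde g_t\rVert_2 \le c^2$ and $L_1\lVert g_t\rVert_2\, c^2 \le L_1 c\lVert g_t\rVert_2\cdot c$. The aim is to show that $\tfrac{\eta^2}{2}(L_0+L_1\lVert g_t\rVert_2)c^2 \le \tfrac{\eta}{2}\eta\bar\beta\, c\lVert g_t\rVert_2$. This uses $\lVert g_t\rVert_2\ge c$, which gives $L_0 c^2\le L_0 c\lVert g_t\rVert_2$. In this regime the term $\alpha_t\lVert g_t\rVert_2^2$ equals $c\lVert g_t\rVert_2$. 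So in both regimes the quadratic term is dominated by $\tfrac{\eta^2\bar\beta}{2}$ times either $\lVert\tilde g_t\rVert_2^2$ or $\alpha_t\lVert g_t\rVert_2^2$.

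\textbf{Step 2: expected inner product and second moment.} Write $\mathbb{E}_t\tilde g_t = \alpha_t g_t + b_t$, where $b_t$ is the bias relative to $\P(g_t)=\alpha_t g_t$. Then
$$ -\mathbb{E}_t\langle g_t,\tilde g_t\rangle = -\alpha_t\lVert g_t\rVert_2^2 - \langle g_t,b_t\rangle . $$
For the second moment in the unclipped regime I would use $\mathbb{E}_t\lVert\tilde g_t\rVert_2^2 \le \lVert \P(g_t)\rVert_2^2+\mathbb{E}_t\lVert\tilde g_t-\P(g_t)\rVert_2^2 + 2\langle \P(g_t),b_t\rangle$. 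Nonexpansiveness of $\P$ gives $\lVert\tilde g_t-\P(g_t)\rVert_2^2 \le \min(\sigma^2\text{-type},4c^2)$. Together with the trivial bound $\lVert\tilde g_t\rVert_2\le c$, this yields the $\min(\sigma^2,c^2)$ term.

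Combining the two steps gives the main term $-\tfrac{\eta}{2}(1-\eta\bar\beta)\alpha_t\lVert g_t\rVert_2^2$, the variance term, and a residual $\eta\lvert\langle g_t,b_t\rangle\rvert$, possibly after splitting $\eta\alpha_t\lVert g_t\rVert_2^2$ into two halves and keeping one half to absorb cross terms.

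\textbf{Step 3: bias control (main obstacle).} It remains to prove that $-\langle g_t,b_t\rangle \le \tfrac{1}{2}\alpha_t\lVert g_t\rVert_2^2 + \kappa$, or a similar absorbable bound. The tools are the basic facts about $\ell_2$-projection: $\P$ is $1$-Lipschitz, and $\langle v-\P(v), u-\P(v)\rangle\le 0$ for $\lVert u\rVert_2\le c$.

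First, $\lVert b_t\rVert_2 \le \mathbb{E}_t\lVert \P(\nabla f_\xi)-\P(g_t)\rVert_2 \le \sigma$.

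Second, I would write $\P(v) = v - (v-\P(v))$, where $\lVert v-\P(v)\rVert_2 = (\lVert v\rVert_2-c)_+$. The expected clipping loss is then bounded by Markov/Chebyshev when $\lVert g_t\rVert_2\le c/2$: clipping occurs only if the noise exceeds $c/2$, which gives a bound of order $\mathbb{E}(\lVert\xi\rVert_2-c/2)_+ \le 2\sigma^2/c$. This combines into $\lVert b_t\rVert_2\le \varepsilon_0 = 4\min(\sigma,\sigma^2/c)$.

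Third, Cauchy–Schwarz gives $\lvert\langle g_t,b_t\rangle\rvert\le\lVert g_t\rVert_2\varepsilon_0$. I would then split into cases. If $\lVert g_t\rVert_2\le c$, Young's inequality gives $\lVert g_t\rVert_2\varepsilon_0\le \tfrac12\lVert g_t\rVert_2^2+\tfrac12\varepsilon_0^2$, which matches $\kappa$ up to tuning constants. If $\lVert g_t\rVert_2$ is large, the cost $\lVert g_t\rVert_2\varepsilon_0$ against $c\lVert g_t\rVert_2$ must be handled with the sharper fact that $\langle g_t, b_t\rangle$ is small once $\lVert g_t\rVert_2\ge 2c$, giving the $2c\varepsilon_0$ branch of $\kappa$.

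Getting the exact constants $\varepsilon_0/8$ and $2c$ will require careful tuning of these case thresholds, and this is where I expect the difficulty.
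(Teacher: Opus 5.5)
Your Steps 1 and 2 follow the paper's skeleton: the smoothness bound with curvature $\beta_t=L_0+L_1\lVert g_t\rVert_2$, the absorption $\beta_t c^2\le\bar\beta\,c\lVert g_t\rVert_2=\bar\beta\,\alpha_t\lVert g_t\rVert_2^2$ when $\lVert g_t\rVert_2\ge c$, and reserving half of the descent term for the bias.

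There is a small fix needed in Step 2. Your decomposition is an identity whose cross term $2\langle\P(g_t),b_t\rangle$ you never control, and bounding it by $2\lVert g_t\rVert_2\lVert b_t\rVert_2$ could double the bias cost. Instead use $\lVert\P v\rVert_2\le\min(\lVert v\rVert_2,c)$, which gives $\mathbb{E}_t\lVert\tilde g_t\rVert_2^2\le\min(\lVert g_t\rVert_2^2+\sigma^2,c^2)\le\lVert g_t\rVert_2^2+\min(\sigma^2,c^2)$.

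The genuine gap is Step 3, which carries all the content of the theorem. The facts you propose there are false or too weak.
(i) $\lVert b_t\rVert_2\le\varepsilon_0$ fails in general. Your Markov argument only covers $\lVert g_t\rVert_2\le c/2$. For $\lVert g_t\rVert_2=c$ with $\nabla f_\xi(x_t)=g_t\pm\sigma g_t/\lVert g_t\rVert_2$, one gets $\lVert b_t\rVert_2=\sigma/2$, which exceeds $\varepsilon_0=4\sigma^2/c$ once $c>8\sigma$.
(ii) The bias is not small once $\lVert g_t\rVert_2\ge 2c$. Take noise $\pm Mv$ with $v\perp g_t$ and $M\to\infty$. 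Then $\mathbb{E}_t\tilde g_t\to 0$ and $-\langle g_t,b_t\rangle\to c\lVert g_t\rVert_2=\alpha_t\lVert g_t\rVert_2^2$, the whole descent term. This is what $\kappa$ must pay for, and the relevant threshold is $\varepsilon_0$, not a multiple of $c$.
(iii) Even where $\lVert b_t\rVert_2\le\varepsilon_0$ holds, Young leaves $\tfrac12\varepsilon_0^2\ge 4\kappa$. The statement has explicit constants, so ``up to tuning'' is not a proof.

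The missing idea is to split on $\lVert g_t\rVert_2$ versus $\varepsilon_0$ (signal versus noise). In the large-signal regime you then prove a \emph{relative} bias bound: if $\lVert g_t\rVert_2\ge\varepsilon_0$ then $\lVert b_t\rVert_2\le\frac12\min(\lVert g_t\rVert_2,c)$ (\Lemma{large-mean-bias-control}). Hence $\lVert g_t\rVert_2\lVert b_t\rVert_2\le\frac12\alpha_t\lVert g_t\rVert_2^2$ with no additive term. This needs three ingredients absent from your plan.
First, $\lVert b_t\rVert_2\le\sigma/2$ from \emph{firm} nonexpansiveness (\Lemma{unconditional-control}). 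Your $\lVert b_t\rVert_2\le\sigma$ only gives $\lVert g_t\rVert_2\lVert b_t\rVert_2\le\lVert g_t\rVert_2^2$ when $\sigma\le\lVert g_t\rVert_2\le c$, which consumes the entire descent term.
Second, the correlation bound $g_t\cdot\mathbb{E}_t\tilde g_t\ge c\lVert g_t\rVert_2-c^2/4-\sigma\sqrt{V}$, with $V=\mathbb{E}_t\lVert\tilde g_t\rVert_2^2-\lVert\mathbb{E}_t\tilde g_t\rVert_2^2$ (\Lemma{large-mean-control}). It is needed when $c\le\sigma$ and $\lVert g_t\rVert_2\ge 4\sigma$, where $\sigma/2$ is useless.
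Third, the sharp bound $\lVert b_t\rVert_2\le\sigma^2/(4(c-\lVert g_t\rVert_2))$ for $\lVert g_t\rVert_2<c$ (\Lemma{small-mean-control}).

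In the small-signal regime $\lVert g_t\rVert_2\le\varepsilon_0$, combine $\lVert b_t\rVert_2\le\varepsilon_0/2$ with the trivial $\lVert b_t\rVert_2\le 2c$ to get $\lVert b_t\rVert_2^2\le 2\alpha_t\kappa$ (\Lemma{small-mean-bias-control}). Then apply Young's inequality weighted by $\alpha_t$, namely $\lVert g_t\rVert_2\lVert b_t\rVert_2\le\frac{\alpha_t}{2}\lVert g_t\rVert_2^2+\frac{1}{2\alpha_t}\lVert b_t\rVert_2^2$, which yields exactly $\frac12\alpha_t\lVert g_t\rVert_2^2+\kappa$. When $\lVert g_t\rVert_2>c$, the factor $1/\alpha_t=\lVert g_t\rVert_2/c\le\varepsilon_0/c$ combined with $\lVert b_t\rVert_2\le 2c$ is where the $2c\varepsilon_0$ branch of $\kappa$ comes from.
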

\end{tcolorbox}

\pagebreak[3]

The proof relies on two properties of the projection
of a variable $X \in \mathbb{R}^d$ with $\mathbb{E}[X] = \mu$ and $\mathbb{E}[\lVert X - \mu \rVert_2^2] \leq \sigma^2$,
showing that although in general $\P \Et{X} \neq \Et{\P X}$, the gap between the two can't be too large.
\begin{itemize}[itemsep=1pt,topsep=1pt]
  \item \Lemma{large-mean-bias-control}: If $\lVert \mu \rVert_2 \geq \varepsilon_0$
    then $\lVert \P \mu - \mathbb{E}[\P X] \rVert_2^2 \leq \frac{1}{4} \min( c^2, \lVert \mu \rVert_2^2)$
  \item \Lemma{small-mean-bias-control}: If $\lVert \mu \rVert_2 \leq \varepsilon_0$
    then $\lVert \P \mu - \mathbb{E}[ \P X ] \rVert_2^2 \leq 2 \min\left(1, c / \lVert \mu \rVert_2 \right) \kappa$
\end{itemize}

In our case, writing $\Et{Z} := \mathbb{E}[Z \,|\, x_t]$,
the variable $X = \nabla f_\xi(x_t)$ has conditional mean $\mu = \Et{X} = \nabla f(x_t)$.
With the notation $\alpha_t = \min(1, c / \lVert \nabla f(x_t) \rVert_2)$
and $g := \P \nabla f_\xi(x_t)$, since $\P \mu = \alpha_t \nabla f(x_t)$,
these translate to
$$ \lVert \alpha_t \nabla f(x_t) - \Et{ g } \rVert_2^2 \leq
\begin{cases} \frac{1}{4} \lVert \alpha_t \nabla f(x_t) \rVert_2^2
  \quad &\text{if}\quad \lVert \nabla f(x_t) \rVert_2 \geq \varepsilon_0
  \qquad\text{(\Lemma{large-mean-bias-control})}
\\ 2 \alpha_t \kappa &\text{if}\quad \lVert \nabla f(x_t) \rVert_2 \leq \varepsilon_0
  \qquad\text{(\Lemma{small-mean-bias-control})}
\end{cases}$$

\begin{proof}
  Let $\beta_t = L_0 + \lVert \nabla f(x_t) \rVert_2 L_1$.
  Using the notation $g = \P \nabla f_\xi(x_t)$, we have $x_{t+1} = x_t - \eta g$.
  Hence in particular $\lVert x_{t+1} - x_t \rVert_2 = \eta \lVert g \rVert_2 \leq \eta c$,
  and $\eta c \leq \eta \bar\beta / L_1 \leq 1/L_1$,
  thus by definition of the $(L_0, L_1)$-smoothness of $f$, it holds
  $ \Et{f(x_{t+1}) - f(x_t)} \leq - \eta \nabla f(x_t) \cdot \Et{g} + \frac{1}{2} \eta^2 \beta_t \, \Et{\lVert g \rVert_2^2} $.
  The idea is then to use the estimate $\Et{g} \approx \alpha_t \nabla f(x_t)$,
  and absorb as many terms as possible into the large decrease term $- \eta \alpha_t \lVert \nabla f(x_t) \rVert_2^2$.

  Let us show first that $\lVert \nabla f(x_t) \rVert_2 \lVert \alpha_t \nabla f(x_t) - \Et{g} \rVert_2 \leq \frac{1}{2} \alpha_t \lVert \nabla f(x_t) \rVert_2^2 + \kappa$.
  This result is immediate if $\lVert \nabla f(x_t) \rVert_2 \geq \varepsilon_0$ by \Lemma{large-mean-bias-control}.
  Otherwise if $\lVert \nabla f(x_t) \Vert_2 \leq \varepsilon_0$,
  using Young inequality and \Lemma{small-mean-bias-control},
  $$ \lVert \nabla f(x_t) \rVert_2 \lVert \alpha_t \nabla f(x_t) - \Et{g} \rVert_2
  \leq \frac{\alpha_t}{2} \lVert \nabla f(x_t) \rVert_2^2 + \frac{1}{2 \alpha_t} \lVert \alpha_t \nabla f(x_t) - \Et{g} \rVert_2^2
  \leq \frac{1}{2} \alpha_t \lVert \nabla f(x_t) \rVert_2^2 + \kappa
  $$
  Let us show
  $\beta_t \cdot \Et{\lVert g_t \rVert_2^2} \leq \bar\beta \left( \alpha_t \lVert \nabla f(x_t) \rVert_2^2 + \min(\sigma^2, c^2) \right)$
  to absorb part of the last term.
  If $\lVert \nabla f(x_t) \rVert_2 \leq c$,
  $\Et{ \lVert g \rVert_2^2}
  \leq \Et{ \lVert \nabla f_\xi(x_t) \rVert_2^2}
  = \lVert \nabla f(x_t) \rVert_2^2 + \Et{\lVert \nabla f(x_t) - \nabla f_\xi(x_t) \rVert_2^2}
  \leq \alpha_t \lVert \nabla f(x_t) \rVert_2^2 + \sigma^2$
  and $\beta_t \leq \bar\beta$ concludes.
  Otherwise if $\lVert \nabla f(x_t) \rVert_2 \geq c$,
  then $\beta_t \leq \bar\beta \, \lVert \nabla f(x_t) \rVert_2 / c$
  so $\beta_t \Et{\lVert g \rVert_2^2} \leq \beta_t c^2 \leq \bar\beta c \lVert \nabla f(x_t) \rVert_2 = \bar\beta \alpha_t \lVert \nabla f(x_t) \rVert_2^2$.

  Injecting these two results into the smoothness bound yields the result
  \vspace{-2pt}
  \begin{align*}
    \Et{f(x_{t+1}) - f(x_t)} &\leq - \eta \nabla f(x_t) \cdot \Et{g} + \frac{1}{2} \eta^2 \beta_t \, \Et{\lVert g \rVert_2^2}
    \\ &\leq - \eta \alpha_t \lVert \nabla f(x_t) \rVert_2^2 + \eta \lVert \nabla f(x_t) \rVert_2 \lVert \alpha \nabla f(x_t) - \Et{g} \rVert_2 + \frac{1}{2} \eta^2 \beta_t \,\Et{\lVert g \rVert_2^2}
    \\ &\leq - \frac{\eta}{2} \alpha_t \lVert \nabla f(x_t) \rVert_2^2 + \eta \kappa + \frac{1}{2} \eta^2 \beta_t \,\Et{\lVert g \rVert_2^2}
    \\ &\leq - \frac{\eta}{2} \left(1 - \eta \bar\beta\right) \alpha_t \lVert \nabla f(x_t) \rVert_2^2 + \frac{1}{2} \eta^2 \bar\beta \min(\sigma^2, c^2) + \eta \kappa
    &\qedhere
  \end{align*}
\end{proof}

At this stage, we are essentially ready to conclude by a standard telescoping argument.
In terms of precise convergence criterion, our strengthened descent condition permits
an Average-$\mathrm{L}_1$ criterion, which is stronger than the minimal-norm
criterion presented in \citet[Theorem 3.3]{koloskova2023revisiting} in the same setting,
but weaker than the Average-$\mathrm{L}_2$ criterion typical of unclipped SGD
\citep[Theorem 4.8]{bottou2018optimization}, since
\vspace{-2pt}
$$ \left( \min_{t < T} \mathbb{E}[ \lVert \nabla f \rVert_2 ]\right) \leq
\left(\frac{1}{T} \sum_{t < T} \mathbb{E}[ \lVert \nabla f \rVert_2] \right) \leq
\sqrt{\frac{1}{T} \sum_{t < T} \mathbb{E}[ \lVert \nabla f \rVert_2^2 ]} $$
Other than this difference, our rate recovers the known rate of unclipped SGD
in the essentially-unclipped regime $c \to +\infty$.
%
This precise%
\footnote{
  For future reference, we conjecture that the constants $2$ in front of $\zeta$
  and the constants of $\min(\sigma^2/c, \sigma)$ are not tight.
}
 quantitative non-asymptotic rate permits selection of hyperparameters
to minimize this upper-bound as a function of known or estimated quantities such as
training time and variance.

\begin{tcolorbox}[boxrule=0pt,standard jigsaw, opacityback=0, frame hidden,sharp corners,enhanced,borderline west={1pt}{0pt}{gray},left=6pt,right=0pt,top=0pt,bottom=0pt,boxsep=\boxsep]
\begin{theorem}[Convergence speed of Clip-SGD]\label{thm:convergence-speed}
  Let $f : \mathbb{R}^d \to \mathbb{R}$ be $(L_0, L_1)$-smooth
  with $(\inf f) \in \mathbb{R}$, and
  let $(x_t)_t$ be Clip-SGD iterates on $f$ with
  clipping constant $c \in \mathbb{R}_+^*$, stepsize $\eta \in \mathbb{R}_+^*$,
  and variance $\sigma^2 \in \mathbb{R}_+$.

  Define $\bar\beta = L_0 + c L_1$. If $\eta \bar\beta < 1$,
  then for $\Delta := \mathbb{E}[f(x_0)] - (\inf f) \in \mathbb{R}_+$
  and $\zeta := 1 / (1 - \eta \bar\beta)$,
  it holds
  $$ \forall T \geq 1, \quad
  \frac{1}{T} \sum_{t < T} \mathbb{E}[\lVert \nabla f(x_t) \rVert_2] \leq
  \sqrt{\frac{2 \zeta \Delta}{\eta T}} + \frac{2 \zeta \Delta}{\eta T c} + 2 \zeta \sqrt{\eta \bar\beta} \sigma + \zeta \cdot 6 \min(\sigma^2 / c, 3 \sigma)$$
\end{theorem}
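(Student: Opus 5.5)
We take total expectations in \Theorem{descent} and telescope over $t<T$. Since $\eta\bar\beta<1$ and $\zeta = 1/(1-\eta\bar\beta)$, summing and using $\mathbb{E}[f(x_T)] \geq \inf f$ gives, with $a_t := \mathbb{E}[\alpha_t \lVert \nabla f(x_t)\rVert_2^2]$,
$$ \frac{1}{T}\sum_{t<T} a_t \leq \zeta\left( \frac{2\Delta}{\eta T} + \eta\bar\beta \min(\sigma^2,c^2) + 2\kappa \right) =: B. $$
The rest of the proof converts this bound on $\alpha_t \lVert\nabla f(x_t)\rVert_2^2 = \min(\lVert\nabla f(x_t)\rVert_2^2, c\lVert\nabla f(x_t)\rVert_2)$ into a bound on the average of $\lVert\nabla f(x_t)\rVert_2$.

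The key pointwise inequality is that for $u \geq 0$,
$$ u \leq \sqrt{\min(u^2, cu)} + \min(u^2,cu)/c. $$
If $u\leq c$, the first term equals $u$. If $u > c$, the second term equals $u$. Applying this with $u = \lVert\nabla f(x_t)\rVert_2$ and averaging, Jensen's inequality (concavity of the square root, over both the expectation and the average in $t$) gives
$$ \frac{1}{T}\sum_{t<T}\mathbb{E}\lVert\nabla f(x_t)\rVert_2 \leq \sqrt{B} + B/c. $$
We then expand $B$ and use $\sqrt{x+y+z}\leq\sqrt x+\sqrt y+\sqrt z$. The $\Delta$ parts give exactly $\sqrt{2\zeta\Delta/(\eta T)} + 2\zeta\Delta/(\eta T c)$.

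For the variance term, the square-root part gives $\sqrt{\zeta\eta\bar\beta}\,\min(\sigma,c) \leq \zeta\sqrt{\eta\bar\beta}\,\sigma$, using $\zeta\geq1$. The $B/c$ part gives $\zeta\eta\bar\beta\min(\sigma^2,c^2)/c \leq \zeta\sqrt{\eta\bar\beta}\,\sigma$, using $\eta\bar\beta\leq\sqrt{\eta\bar\beta}$ and $\min(\sigma^2,c^2)\leq \sigma c$. Together these produce $2\zeta\sqrt{\eta\bar\beta}\,\sigma$.

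The $\kappa$ term gives $\sqrt{2\zeta\kappa} + 2\zeta\kappa/c$, and this bookkeeping is the step where the stated constants must be checked carefully. Write $m=\min(\sigma,\sigma^2/c)$, so $\varepsilon_0 = 4m$. For the square-root part, $\kappa\leq\varepsilon_0^2/8 = 2m^2$ gives $\sqrt{2\zeta\kappa}\leq 2\sqrt{\zeta}\,m \leq 2\zeta m$. For the other part, $\kappa \leq 2c\,\varepsilon_0 = 8cm$ gives $2\zeta\kappa/c \leq 16\zeta m$. Hence the $\kappa$ contribution is at most $18\zeta m = \zeta\cdot 6\min(3\sigma, 3\sigma^2/c)$.

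This is weaker than the claimed $6\zeta\min(\sigma^2/c, 3\sigma)$ in the regime $\sigma^2/c < \sigma$, that is $\sigma < c$. There we need the finer bound $\kappa \leq \varepsilon_0^2/8$ in the second part as well: $2\kappa/c \leq \varepsilon_0^2/(4c) = 4\sigma^4/c^3 \leq 4\sigma^2/c$, since $\sigma\leq c$. Adding the square-root part $2\sigma^2/c$ gives $6\zeta\sigma^2/c$. In the complementary regime $\sigma\geq c$ we have $m=\sigma$, and the coarse estimate already gives $18\zeta\sigma = 6\zeta\cdot 3\sigma$. Taking the better of the two cases in each regime yields the stated term $\zeta\cdot 6\min(\sigma^2/c, 3\sigma)$.
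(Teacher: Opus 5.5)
Your overall route is the same as the paper's: telescoping the descent condition, the pointwise bound $u\le\sqrt{\min(u^2,cu)}+\min(u^2,cu)/c$, Jensen, subadditivity of the square root, and the same treatment of the $\Delta$ and variance terms. Those parts are correct.

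\textbf{The gap: the regime split for the $\kappa$ term is at the wrong place.} The target is $6\min(\sigma^2/c,3\sigma)=\min(6\sigma^2/c,\,18\sigma)$. It switches branch at $\sigma=3c$, not at $\sigma=c$.

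\begin{itemize}
\item Take the regime $c\le\sigma<3c$. There $m=\sigma$, so your coarse estimate gives $18\zeta\sigma$. The claimed bound in this regime is $6\zeta\sigma^2/c$, which is strictly smaller.
\item Your fine estimate does not cover this regime either. You derived $\varepsilon_0^2/(4c)=4\sigma^4/c^3\le 4\sigma^2/c$ using $\varepsilon_0=4\sigma^2/c$ and $\sigma\le c$, and neither holds when $\sigma>c$.
\end{itemize}

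So the sentence ``the coarse estimate already gives $18\zeta\sigma=6\zeta\cdot 3\sigma$'' does not establish the statement for $\sigma\in[c,3c)$. The coarse bound is weaker than the claim for all $\sigma<3c$, not only for $\sigma<c$.

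\textbf{The fix is short.} For $\sigma\ge c$ you have $\varepsilon_0=4\sigma$. Then:
\begin{itemize}
\item $\kappa\le\varepsilon_0^2/8=2\sigma^2$ gives $2\zeta\kappa/c\le 4\zeta\sigma^2/c$;
\item $\sqrt{2\zeta\kappa}\le 2\sqrt{\zeta}\,\sigma\le 2\zeta\sigma\le 2\zeta\sigma^2/c$, since $c\le\sigma$;
\item the total is $6\zeta\sigma^2/c$.
\end{itemize}

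More simply, you can avoid the regime split altogether, as the paper does in Lemma~\ref{lem:floor-simplification}. Use $\varepsilon_0\le 4\sigma^2/c$ for the linear term and $\varepsilon_0\le 4\sigma$ inside the square:
\begin{itemize}
\item $\sqrt{2\kappa}\le\varepsilon_0/2\le 2\sigma^2/c$;
\item $2\kappa/c\le\varepsilon_0^2/(4c)\le(4\sigma)^2/(4c)=4\sigma^2/c$.
\end{itemize}
This gives $6\sigma^2/c$ unconditionally. Combined with your coarse $18\sigma$ bound (times $\zeta$), it yields $\zeta\cdot 6\min(\sigma^2/c,3\sigma)$.
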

\end{tcolorbox}

\begin{proof}
  Using \Theorem{descent}, and summing over iterates, we have that
  $$ \frac{1}{T} \sum_{t < T} \mathbb{E}[ f(x_{t+1}) - f(x_t) ]
  \leq - \frac{\eta}{2} (1 - \eta \bar\beta) \frac{1}{T} \sum_{t < T} \mathbb{E}[ \alpha_t \lVert \nabla f(x_t) \rVert_2^2 ]
  + \frac{\eta^2 \bar\beta}{2} \min(\sigma^2, c^2) + \eta \kappa$$
  We will define $m_t := \alpha_t \lVert \nabla f(x_t) \rVert_2^2 = \min(\lVert \nabla f(x_t) \rVert_2^2, c \lVert \nabla f(x_t) \rVert_2)$ and use the more readable form
  $$ \frac{1}{T} \sum_{t < T} \mathbb{E}[m_t] \leq R := \frac{1}{1 - \eta \bar\beta} \left(
  \frac{2 \Delta}{\eta T} + \eta\bar\beta \min(\sigma^2, c^2) + 2 \kappa \right) $$
  By a quick case disjunction, we observe that $\lVert \nabla f(x_t) \rVert_2 \leq \sqrt{m_t} + m_t / c$.
  Hence using the Jensen inequality
  $$\begin{aligned}
    \frac{1}{T} \sum_{t < T} \mathbb{E}[\lVert \nabla f(x_t) \rVert_2]
    \leq \frac{1}{T} \sum_{t < T} \mathbb{E}[\sqrt{m_t}] + \frac{1}{T} \sum_{t < T} \frac{\mathbb{E}[m_t]}{c}
    \leq \sqrt{\frac{1}{T} \sum_{t < T} \mathbb{E}[m_t]} + \frac{1}{c} \frac{1}{T} \sum_{t < T} \mathbb{E}[m_t]
    \leq \sqrt{R} + \frac{R}{c}
  \end{aligned}$$

  It remains to simplify using $\zeta := 1 / (1 - \eta \bar\beta) \in \interval[open]{1}{+\infty}$
  and subadditivity $\sqrt{x + y} \leq \sqrt{x} + \sqrt{y}$ to get
  $$\begin{aligned}
    \sqrt{R}
    \leq \sqrt{\frac{2 \zeta \Delta}{\eta T}} + \sqrt{\zeta \eta \bar\beta} \sigma + \sqrt{2 \zeta \kappa}
    \qquad\text{and}\qquad
    \frac{R}{c} \leq \frac{2 \zeta \Delta}{\eta T c} + \zeta \eta \bar\beta \min\left( \frac{\sigma^2}{c}, c \right) + \frac{2 \zeta \kappa}{c}
  \end{aligned}$$
  The last simplifications use $\min(\sigma^2 / c, c) \leq \sigma$ by case disjunction on $c$,
  then $\sqrt{\zeta} \leq \zeta$
  and $\eta \bar\beta \leq \sqrt{\eta \bar\beta}$,
  and finally $\sqrt{2 \zeta \kappa} + 2 \zeta \kappa / c \leq \zeta \cdot (\sqrt{2 \kappa} + 2 \kappa / c)$,
  followed by
  \Lemma{floor-simplification}: $\sqrt{2 \kappa} + 2 \kappa / c \leq 6 \min(\sigma^2 / c, 4 \sigma)$
\end{proof}

\paragraph{Conclusion.}
We have tightened the analysis of clipped stochastic gradient descent
on $(L_0, L_1)$-smooth functions,
with an improved convergence criterion of the average gradient magnitude
$(\frac{1}{T} \sum_{t < T} \lVert \nabla F(x_t) \rVert_2)$
instead of the minimal magnitude $(\min_{t < T} \lVert \nabla F(x_t) \rVert_2)$,
with precise constants for the convergence rates,
and an improved domain of validity $\eta \bar\beta < 1$
(previously $\eta \bar\beta \leq 1/9$) where the maximal stepsize
is governed by $\bar\beta = L_0 + c \, L_1$.
These strengthened results follow from simplifications of the proofs in \citet{koloskova2023revisiting}
rather than new properties, hence we expect that similar improvements will be possible
in results deriving from the original work, such as \citet{lobanov2026avoiding} in overparameterized models,
and maybe for generalized algorithms leveraging similar proof ideas such as
\citet{pethick2025generalized} in non-euclidean geometry.

\subsubsection*{Acknowledgments}

This research was supported in part by the French National Research Agency under the
France 2030 program, PEPR project FOUNDRY reference ANR-23-PEIA-0003.

\bibliography{bibliography}
\bibliographystyle{tmlr}

\newpage
\appendix
\section{Appendix : Proofs omitted from main text}

To simplify notations in the remainder of the appendix, let $X \in \mathbb{R}^d$ be
a random variable with $\mathbb{E}[X] = \mu \in \mathbb{R}^d$,
and assume that $\mathbb{E}[\lVert X - \mu \rVert_2^2] \leq \sigma^2 \in \mathbb{R}_+$.
We write $c \in \mathbb{R}_+^*$ the clipping constant
and $\varepsilon_0 = 4 \min(\sigma^2, \sigma^2 / c)$.

\vspace{12pt}

\begin{lemma}\label{lem:large-mean-bias-control}
  If $\lVert \mu \rVert_2 \geq \varepsilon_0 := 4 \min(\sigma^2, \sigma^2 / c)$
  then $\lVert \P \mu - \E[ \P X ] \rVert_2^2 \leq \min(\lVert \mu \rVert_2^2, c^2) / 4$.
\end{lemma}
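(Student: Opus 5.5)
The plan is to bound the bias $\lVert \P\mu - \E[\P X]\rVert_2$ in two regimes, and to check first whether the threshold exactly as worded, $\varepsilon_0 = 4\min(\sigma^2, \sigma^2/c)$, is strong enough in every regime. I expect it is not: the plan below proves the lemma when $c \geq 1$ and shows that it fails for some $c<1$.

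\textbf{Regime $\lVert\mu\rVert_2 \leq c/2$.} Here $\P\mu = \mu = \E[X]$, so the bias equals $\lVert \E[X - \P X]\rVert_2 \leq \E[(\lVert X\rVert_2 - c)_+]$. Clipping is active only if $\lVert X-\mu\rVert_2 \geq c - \lVert\mu\rVert_2$. On that event the excess satisfies $(\lVert X\rVert_2 - c)_+ \leq \lVert X-\mu\rVert_2$. A Markov-type step $\E[Y \mathbf 1\{Y\ge a\}] \le \E[Y^2]/a$ then gives the bias bound $\sigma^2/(c-\lVert\mu\rVert_2) \leq 2\sigma^2/c$. This is at most $\lVert\mu\rVert_2/2 = \min(\lVert\mu\rVert_2,c)/2$ exactly when $\lVert\mu\rVert_2 \geq 4\sigma^2/c$.

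\textbf{Regime $\lVert\mu\rVert_2 \geq c/2$.} Non-expansiveness of $\P$ and Jensen give $\lVert \P\mu - \E[\P X]\rVert_2 \leq \E\lVert \mu - X\rVert_2 \leq \sigma$. This suffices when $\sigma \leq \min(\lVert\mu\rVert_2, c)/2$. When $\sigma$ is larger than this, I would fall back on the trivial bound $2c$, or on a refinement of the first regime's argument. Combining the two regimes proves the lemma whenever the hypothesis forces $\lVert\mu\rVert_2 \geq 4\sigma^2/c$ together with the corresponding lower bound in $\sigma$. In particular, for $c \geq 1$ the stated threshold equals $4\sigma^2/c$, so the first regime is covered exactly.

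\textbf{Main obstacle: the case $c < 1$.} There the stated threshold is $4\sigma^2 < 4\sigma^2/c$, and the argument breaks down. In fact the statement as worded appears false in this case. Take $d=1$, $c = 0.01$, $\sigma = 0.1$ and $\mu = 0.05$, so that $\lVert\mu\rVert_2 \geq 4\sigma^2 = 0.04 = \varepsilon_0$. Let $X = \mu \pm \sigma$ with probability $1/2$ each, that is $X \in \{0.15, -0.05\}$. Both values exceed $c$ in absolute value, so $\P X = \pm c$ and $\E[\P X] = 0$. Meanwhile $\P\mu = c$, so the squared bias is $c^2 > \min(\lVert\mu\rVert_2^2, c^2)/4 = c^2/4$.

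So my plan is to carry out the two-regime proof above for $c \geq 1$, and to record this counterexample for $c<1$. I would then flag that the threshold must grow at least like $\sigma$ whenever $\sigma > c/2$, for instance $4\max(\sigma, \sigma^2/c)$-type bounds, as used in \Theorem{descent} where $\varepsilon_0 = 4\min(\sigma, \sigma^2/c)$. The step I expect to be hardest, even with such a corrected threshold, is the intermediate regime $c/2 \leq \lVert\mu\rVert_2$ with $\sigma$ comparable to $c$. There I would first try to lower-bound $\langle \mu/\lVert\mu\rVert_2, \E[\P X]\rangle$ by Chebyshev on the event $\lVert X-\mu\rVert_2 \leq \lVert\mu\rVert_2/2$.
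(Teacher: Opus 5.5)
Your counterexample is correct: with the threshold exactly as printed, $4\min(\sigma^2,\sigma^2/c)$, the statement fails for $c<1$. Your two-point $X$ gives squared bias $c^2$ against the claimed $c^2/4$. This is a typo in the statement, which is not even homogeneous in units. The paper's proof works with $\varepsilon_0=4\min(\sigma,\sigma^2/c)$, the threshold of \Theorem{descent}; Case 1b explicitly uses $\varepsilon_0=4\min(\sigma,\sigma^2/c)=4\sigma$. Under that threshold your example is excluded, since $\varepsilon_0=0.4>\lVert\mu\rVert_2$. So the right repair is this $\min$, not a $\max$-type threshold, and it is what you should then prove.

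Your first regime is sound; it plays the role of the paper's Case 2b, with a cruder constant than \Lemma{small-mean-control}. The rest of your positive plan has a genuine gap, both for $c\ge1$ under the literal threshold and for the corrected one. The regime $\lVert\mu\rVert_2\ge c/2$ with $\sigma>\min(\lVert\mu\rVert_2,c)/2$ is not excluded by the hypothesis, and neither of your fallbacks handles it. For example, $c=1$, $\sigma=10$, $\lVert\mu\rVert_2=400$ satisfies both thresholds and requires a bias of at most $1/2$, while your tools give only $\sigma=10$ and $2c=2$. The first-regime estimate does not apply there, since $\P\mu\neq\mu$, and it blows up as $\lVert\mu\rVert_2\to c$ anyway.

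Two ingredients are missing.

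First, plain non-expansiveness loses a factor $2$. Firm non-expansiveness (\Lemma{non-expansiveness}) gives $\lVert\P\mu-\P X-\tfrac12(\mu-X)\rVert_2\le\tfrac12\lVert\mu-X\rVert_2$, hence a bias of at most $\sigma/2$ by Jensen (\Lemma{unconditional-control}). This settles every case with $\sigma\le\min(\lVert\mu\rVert_2,c)$. Already for $c/2\le\lVert\mu\rVert_2\le c$ the hypothesis only yields $\sigma\le\lVert\mu\rVert_2/\sqrt2$, which is enough for $\sigma/2$ but not for $\sigma$.

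Second, when $\lVert\mu\rVert_2\ge c$ and $\sigma\ge c$ (so the hypothesis gives $\lVert\mu\rVert_2\ge4\sigma$), you need the correlation argument of \Lemma{large-mean-control}:
\begin{itemize}
\item bound $\lVert\P\mu-\E\P X\rVert_2^2\le2c^2-V-2\,\P\mu\cdot\E[\P X]$, with $V$ the variance of $\P X$;
\item write $\mu\cdot\E[\P X]=\E[X\cdot\P X]-\E[(X-\mu)\cdot(\P X-\E\P X)]\ge c\lVert\mu\rVert_2-c^2/4-\sigma\sqrt V$;
\item maximize over $\sqrt V$ to get $c^3/(2\lVert\mu\rVert_2)+c^2\sigma^2/\lVert\mu\rVert_2^2\le3c^2/16$.
\end{itemize}

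The Chebyshev-on-a-ball substitute you suggest cannot reach the constant $1/4$. Suppose $X$ stayed in $\{\lVert X-\mu\rVert_2\le\lVert\mu\rVert_2/2\}$ almost surely. The resulting alignment $\cos(\pi/6)$, together with $\lVert\E\P X\rVert_2\le c$, still only gives $(2-\sqrt3)c^2\approx0.27\,c^2$. Moreover, at the threshold $\lVert\mu\rVert_2=4\sigma$, Chebyshev allows that ball to be missed with probability $1/4$.
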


The difficulty is getting the $(1/4)$ constant, since a gap between two vectors of norm $c$ could be up to $2c$.
\linebreak[3]
The proof relies on three properties of $\ell_2$-norm projection,
controlling the size of the bias depending on $\lVert \mu \rVert_2$.
\begin{itemize}[itemsep=1pt,topsep=1pt]
  \item \Lemma{large-mean-control}: $\lVert \P \mu - \mathbb{E}[ \P X ] \rVert_2^2 \leq \frac{c^3}{2 \lVert \mu \rVert_2} + \frac{c^2 \sigma^2}{\lVert \mu \rVert_2^2}$
    when $\lVert \mu \Vert_2 \geq c$
  \item \Lemma{unconditional-control}: $\lVert \P \mu - \mathbb{E}[ \P X ] \rVert_2^2 \leq \sigma^2 / 4$ unconditionnally
  \item \Lemma{small-mean-control}: $\lVert \P \mu - \mathbb{E}[ \P X ] \rVert_2^2 \leq \left(\frac{ \sigma^2}{4 ( c- \lVert \mu \rVert_2)_+}\right)^2$ when $\lVert \mu \rVert_2 < c$.
\end{itemize}

\begin{proof} We proceed by case disjunction on the ordering of $\lVert \mu \Vert_2$ and $c$,
  and then case disjunction on the ordering of $\min(\lVert \mu \rVert_2, c)$ and $\sigma$,
  all four cases leading to the same conclusion.
  \begin{itemize}[topsep=1pt]
    \item Case 1a: If $\lVert \mu \rVert_2 \geq c$ and $c \geq \sigma$,
      \quad use \Lemma{unconditional-control} then $\sigma \leq c$
      $$ \lVert \P \mu - \mathbb{E}[\P X] \rVert_2^2 \leq \sigma^2 / 4 \leq c^2 / 4 = \min(\lVert \mu \rVert_2^2, c^2) / 4 $$

    \item Case 1b: If $\lVert \mu \rVert_2 \geq c$ and $c \leq \sigma$,
      \quad use \Lemma{large-mean-control}
      and $\lVert \mu \rVert_2 \geq \varepsilon_0 = 4 \min(\sigma, \sigma^2 / c) = 4 \sigma$ then $\sigma \geq c$
      $$ \lVert \P \mu - \mathbb{E}[ \P X] \rVert_2^2 \leq \frac{c^3}{2 \lVert \mu \rVert_2} + \frac{c^2 \sigma^2}{\lVert \mu \rVert_2^2}
      \leq  \frac{c^3}{8 \sigma} + \frac{c^2 \sigma^2}{16 \sigma^2} \leq \left(\frac{1}{8} + \frac{1}{16}\right) c^2 \leq \frac{c^2}{4} = \frac{1}{4} \min(\lVert \mu \rVert_2^2, c^2) $$

    \item Case 2a: If $\lVert \mu \rVert_2 \leq c$ and $\lVert \mu \rVert_2 \geq \sigma$,
      \> use \Lemma{unconditional-control}
      $ \lVert \P \mu - \mathbb{E}[\P X ] \rVert_2^2
      \leq \frac{\sigma^2}{4} \leq \frac{\lVert \mu \rVert_2^2}{4} = \frac{1}{4} \min(\lVert \mu \rVert_2^2, c^2)$.

    \item Case 2b: If $\lVert \mu \rVert_2 \leq c$ and $\lVert \mu \rVert_2 \leq \sigma$,
      \, use \Lemma{small-mean-control}
      and observe $\lVert \mu \rVert_2 \geq \varepsilon_0 = 4 \min(\sigma, \sigma^2 / c) \geq 4 \sigma^2 / c$
      which implies $\sigma \geq \lVert \mu \rVert_2 \geq 4 \sigma^2 / c$ thus $4 \sigma \leq c$,
      and in particular $\lVert \mu \rVert_2 \leq \sigma \leq c / 4$ so
      before squaring:
      \begin{align*}
      \lVert \P \mu - \mathbb{E}[ \P X ] \rVert_2 \leq
      \frac{\sigma^2}{4 (c - \lVert \mu \rVert_2)} \leq \frac{\sigma^2}{4 \cdot (c - \frac{1}{4} c)}
      = \frac{1}{3} \frac{\sigma^2}{c} \leq \frac{1}{3} \frac{\lVert \mu \rVert_2}{4}
      \leq \frac{1}{2} \lVert \mu \rVert_2 \leq \frac{1}{2} \min(\lVert \mu \rVert_2, c)
        &\qedhere
      \end{align*}
  \end{itemize}
\end{proof}

\begin{lemma}\label{lem:small-mean-bias-control}
  If $\lVert \mu \rVert_2 \leq \varepsilon_0 := 4 \min(\sigma^2, \sigma^2 / c)$
  then $\lVert \P \mu - \mathbb{E}[ \P X ] \rVert_2 \leq 2 \min\left(1, \frac{c}{\lVert \mu \rVert_2}\right) \kappa$.
\end{lemma}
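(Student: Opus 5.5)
The plan is to combine the three projection bounds already listed for \Lemma{large-mean-bias-control}, namely \Lemma{large-mean-control}, \Lemma{unconditional-control} and \Lemma{small-mean-control}, with the trivial fact that $\mathbb{E}[\P X]$ and $\P\mu$ both lie in the ball of radius $c$. I will split on whether $\lVert \mu \rVert_2 \geq c$ or $\lVert \mu \rVert_2 < c$, because this decides $\alpha := \min(1, c/\lVert \mu \rVert_2)$. In each case I then compare the available bound with the target $2\alpha\kappa$, where $\kappa = \varepsilon_0 \min(\varepsilon_0/8, 2c)$ and $\varepsilon_0 = 4\min(\sigma^2, \sigma^2/c)$ exactly as in the statement.

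\emph{Case $\lVert \mu \rVert_2 \geq c$.} Here $\alpha = c/\lVert \mu \rVert_2$. The hypothesis $\lVert \mu \rVert_2 \leq \varepsilon_0$ gives $\alpha \geq c/\varepsilon_0$, hence $2\alpha\kappa \geq 2c\min(\varepsilon_0/8, 2c)$. On the other side, the gap is at most $\min(2c, \sigma/2)$: the first term comes from the triangle inequality with $\lVert \P\mu \rVert_2, \lVert \mathbb{E}[\P X] \rVert_2 \leq c$ (Jensen), and the second from \Lemma{unconditional-control}. I will also use \Lemma{large-mean-control} when $\lVert \mu \rVert_2 \gg c$, since it gives $O(c^{3/2}\lVert\mu\rVert_2^{-1/2} + c\sigma/\lVert\mu\rVert_2)$. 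It then remains to check that $\min(2c, \sigma/2, \text{that bound})$ is dominated by $2c\min(\varepsilon_0/8, 2c)$, splitting on which term realises each minimum.

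\emph{Case $\lVert \mu \rVert_2 < c$.} Here $\alpha = 1$ and the target is $2\kappa = 2\varepsilon_0\min(\varepsilon_0/8, 2c)$. If $\lVert \mu \rVert_2 \leq c/2$, \Lemma{small-mean-control} gives a gap of at most $\sigma^2/(2c)$, which I compare with $\varepsilon_0^2/4$ and with $4c\varepsilon_0$ using $\varepsilon_0 \geq 4\sigma^2/c$ in the regime where the $\sigma^2/c$ branch is active. If $c/2 < \lVert \mu \rVert_2 < c$, then $\varepsilon_0 \geq \lVert\mu\rVert_2 > c/2$, so $\kappa$ is at least of order $c\min(c,\varepsilon_0)$. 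In that subcase I fall back on \Lemma{unconditional-control} ($\sigma/2$) or the trivial bound $2c$.

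The main obstacle is that the statement, as worded, mixes an unsquared gap with $\kappa$, and $\kappa$ is quadratic in $\varepsilon_0$. With $\varepsilon_0 = 4\min(\sigma^2, \sigma^2/c)$, the comparisons above are not scale-homogeneous. So the final inequalities in each subcase (for instance $\sigma^2/(2c) \leq \varepsilon_0^2/4$) only follow if the relevant quantities are of order at least one. I would first attempt each subcase by pure case analysis on $\sigma$ versus $c$ and $\sigma$ versus $1$. I expect a regime, small $\sigma$ and small $c$, where the inequality cannot be closed without reading $\varepsilon_0$ as $4\min(\sigma, \sigma^2/c)$ and the conclusion as a bound on the squared gap. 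If that happens, I would flag it as the point where the statement needs correction rather than force it.
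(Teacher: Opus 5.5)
Your suspicion is right, and it matches how the paper's own proof reads this lemma, which is also how the main text states it and how \Theorem{descent} uses it. With $b := \P\mu - \mathbb{E}[\P X]$, the paper proves $\lVert b\rVert_2^2 \le 2\min(1, c/\lVert\mu\rVert_2)\,\kappa$ with $\varepsilon_0 = 4\min(\sigma, \sigma^2/c)$.

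The gap is that your proposal stops at ``I expect'' and ``I would flag it''. It neither refutes the literal claim nor proves the corrected one. The comparisons you do describe, such as the unsquared gap $\sigma^2/(2c)$ against $\varepsilon_0^2/4$, belong to the false version. The literal statement is in fact false, so the right move is a counterexample rather than a hedge:
\begin{itemize}
\item Take $d=1$, $c = 1/4$, and $X = 1/2$ with probability $1/5$, $X = -1/8$ with probability $4/5$.
\item Then $\mu = 0$ and $\sigma^2 = 1/16$.
\item Clipping gives $\mathbb{E}[\P X] = 1/20 - 1/10 = -1/20$, so $\lVert b\rVert_2 = 1/20$.
\item The literal $\varepsilon_0 = 4\min(1/16, 1/4) = 1/4$ gives $\kappa = \tfrac14\min(\tfrac1{32}, \tfrac12) = 1/128$.
\item The right-hand side is therefore $1/64 < 1/20$.
\end{itemize}
Shifting $X$ by a tiny constant gives the same failure with $\mu \neq 0$. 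Rescaling $X$ and $c$ by $\lambda \to 0$ shows that each of your two corrections alone is still insufficient, which makes your homogeneity remark rigorous.

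For the corrected statement your ingredients do suffice, but the checks must be run on $\lVert b\rVert_2^2$, and \Lemma{large-mean-control} is never needed. Here is how they close.
\begin{itemize}
\item Write $2\kappa = \min(\varepsilon_0^2/4,\, 4c\varepsilon_0) = \min(4\sigma^2,\, 4\sigma^4/c^2,\, 16c\sigma)$.
\item The trivial bound $\lVert b\rVert_2 \le 2c$ together with \Lemma{unconditional-control} gives $\lVert b\rVert_2^2 \le \min(4c^2, \sigma^2/4) \le \min(c\sigma, \sigma^2/4)$, using $\min(x,y)\le\sqrt{xy}$.
\item If $\lVert\mu\rVert_2 \ge c$, then $\alpha \ge c/\varepsilon_0$ gives $2\alpha\kappa \ge 2c\min(\varepsilon_0/8, 2c) = \min(c\sigma, \sigma^2, 4c^2)$, which dominates that bound.
\item If $\lVert\mu\rVert_2 < c$ and $c \le 4\sigma$, the same bound works, the only nontrivial check being $\sigma^2/4 \le 4\sigma^4/c^2$.
\item If $c > 4\sigma$, then $\lVert\mu\rVert_2 \le \varepsilon_0 = 4\sigma^2/c < c/4$. \Lemma{small-mean-control} then gives $\lVert b\rVert_2 \le \sigma^2/(3c)$, whose square lies below all three terms of $2\kappa$.
\end{itemize}

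Your proposed split at $\lVert\mu\rVert_2 \le c/2$, using \Lemma{small-mean-control} alone, fails when $\sigma > 4c$, because then $\sigma^4/(4c^2) > 4\sigma^2$. The natural split is $c$ versus $4\sigma$. The paper uses exactly that split to first establish $\lVert b\rVert_2 \le \varepsilon_0/2$. It then cases on which branch of $\kappa$ is active, using $\min(1, c/\varepsilon_0) \le \alpha$ in one branch and $\lVert b\rVert_2 \le 2c$, $\lVert\mu\rVert_2 \le \varepsilon_0$ in the other. The substance is the same as the route above.
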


\begin{proof}
  Define $b = \P \mu - \mathbb{E}[ \P X ]$.
  Let us show first that $\lVert b \rVert_2 \leq \varepsilon_0 / 2$.

  If $c \leq 4 \sigma$, then $\sigma / 2 \leq 2 \sigma^2 / c$,
  thus by \Lemma{unconditional-control}, $\lVert b \rVert_2 \leq \sigma / 2 \leq \min(\sigma / 2, 2 \sigma^2 / c) = \varepsilon_0 / 2$.
  \linebreak[3]
  Otherwise it holds $c \geq 4 \sigma$, thus $\varepsilon_0 = 4 \min(\sigma, \sigma^2 / c) = 4 \sigma^2 / c \leq c / 4$.
  In particular $\lVert \mu \rVert_2 \leq \varepsilon_0 \leq c/4$
  implies by \Lemma{small-mean-control} that
  $\lVert b \rVert_2 \leq \sigma^2 / (4 (c - \frac{1}{4} c)) = \sigma^2 / (3c) \leq 2 \sigma^2 / c = \varepsilon_0 / 2$.
  Thus in both cases it holds $\lVert b \rVert_2 \leq \varepsilon_0 / 2$.

  Then, proceed by case disjunction on $\kappa = \varepsilon_0 \, \min(\varepsilon_0 / 8, 2 c)$.
  \begin{itemize}[itemsep=1pt, topsep=1pt]
    \item If $\kappa = \varepsilon_0^2 / 8$,
      \quad observe that $\lVert b \rVert_2 \leq 2 c$, and by \Lemma{unconditional-control}
      $\lVert b \rVert_2 \leq \sigma / 2$, hence using $\min(x,y) \leq \sqrt{xy}$, we get
      $\lVert b \rVert_2^2 \leq \min(\sigma^2 / 4, 4 c^2) \leq \sigma c \leq \min(\sigma c, \sigma^2) = c \varepsilon_0 / 4$.
      Coupling this with $\lVert b \rVert_2 \leq \varepsilon_0 / 2$ as shown above, we get
      $ \lVert b \rVert_2^2 \leq \min(\varepsilon_0^2 / 4, c \varepsilon_0 / 4)
      = \min(1, c / \varepsilon_0) \, \varepsilon_0 / 4 = 2 \min(1, c / \varepsilon_0) \,\kappa
      \leq 2 \min(1, c / \lVert \mu \rVert_2) \,\kappa $.
    \item If $\kappa = 2 \varepsilon_0 c$,
      \quad then use $\lVert b \rVert_2 \leq \varepsilon_0 / 2$ and $\lVert b \rVert_2 \leq 2 c$
      to get
      \begin{align*}
        \frac{1}{2} \frac{\lVert b \rVert_2^2}{\min(1, c / \lVert \mu \rVert_2)}
        = \max\left( \frac{\lVert b \rVert_2^2}{2}, \frac{\lVert b \rVert_2^2 \, \lVert \mu \rVert_2}{2 c} \right)
        \leq \max\left( \frac{1}{2} (\varepsilon_0 / 2) (2 c), \frac{4 c^2 \varepsilon_0}{2 c} \right)
        = 2 \varepsilon_0 c = \kappa
        &\qedhere
      \end{align*}
  \end{itemize}
\end{proof}

\begin{lemma}\label{lem:floor-simplification}
  Let $\kappa = \varepsilon_0 \, \min(\varepsilon_0 / 8, 2 c)$,
  for $\varepsilon_0 = 4 \min(\sigma, \sigma^2 / c)$.
  It holds $ \sqrt{2 \kappa} + 2 \kappa / c \leq 6 \min(\sigma^2 / c, 3 \sigma) $
\end{lemma}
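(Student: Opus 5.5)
The plan is to make $\kappa$ explicit in terms of $\sigma$ and $c$ by splitting on the ratio $c/\sigma$. In each regime I will bound $\sqrt{2\kappa} + 2\kappa/c$ separately by $6\sigma^2/c$ and by $18\sigma$. Both bounds are needed, because the right-hand side $6\min(\sigma^2/c, 3\sigma)$ is a minimum. Recall $\varepsilon_0 = 4\min(\sigma, \sigma^2/c)$ and $\kappa = \varepsilon_0 \min(\varepsilon_0/8, 2c)$. The natural breakpoints are $c = \sigma$, where the minimum in $\varepsilon_0$ switches, and $c = \sigma/4$, where the minimum in $\kappa$ switches when $c \leq \sigma$.

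\emph{Regime $c \geq \sigma$.} Here $\varepsilon_0 = 4\sigma^2/c \leq 4\sigma \leq 4c$, so $\varepsilon_0/8 \leq 2c$ and $\kappa = \varepsilon_0^2/8 = 2\sigma^4/c^2$.
Then $\sqrt{2\kappa} = 2\sigma^2/c$ and $2\kappa/c = 4\sigma^4/c^3 \leq 4\sigma^2/c$, using $\sigma \leq c$.
The sum is at most $6\sigma^2/c$. Since $\sigma^2/c \leq \sigma \leq 3\sigma$, this is also at most $18\sigma$.

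\emph{Regime $\sigma/4 \leq c \leq \sigma$.} Here $\varepsilon_0 = 4\sigma$ and $\varepsilon_0/8 = \sigma/2 \leq 2c$, so $\kappa = 2\sigma^2$.
This gives $\sqrt{2\kappa} + 2\kappa/c = 2\sigma + 4\sigma^2/c$.
Since $\sigma \leq \sigma^2/c$, the sum is at most $6\sigma^2/c$. Since $\sigma^2/c \leq 4\sigma$, it is at most $2\sigma + 16\sigma = 18\sigma$.

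\emph{Regime $c \leq \sigma/4$.} Here $\varepsilon_0 = 4\sigma$ and $2c \leq \sigma/2$, so $\kappa = 8\sigma c$.
This gives $\sqrt{2\kappa} = 4\sqrt{\sigma c} \leq 4\sigma$ and $2\kappa/c = 16\sigma$, so the total is at most $20\sigma$. That bound is too weak against $18\sigma$.

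Fixing this is the one delicate step, and I would use the sharper estimate $4\sqrt{\sigma c} \leq 4\sqrt{\sigma \cdot \sigma/4} = 2\sigma$. The total is then at most $18\sigma$.
For the other bound, $\sigma^2/c \geq 4\sigma$ gives $6\sigma^2/c \geq 24\sigma \geq 18\sigma$, so the total is also at most $6\sigma^2/c$.

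The main obstacle is simply choosing the split points correctly. A naive two-case split at $c = \sigma$ fails for $c$ slightly below $\sigma$: there the crude bound $2\kappa/c \leq 16\sigma$ exceeds $6\sigma^2/c \approx 6\sigma$. Introducing the breakpoint $c = \sigma/4$, where $\kappa$ changes form, resolves this.
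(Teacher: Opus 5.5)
Your proof is correct, but it takes a different route from the paper.

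You compute $\kappa$ exactly in three regimes ($c\ge\sigma$, $\sigma/4\le c\le\sigma$, $c\le\sigma/4$) and check both halves of the minimum in each. The paper avoids any case split. It uses only the two one-sided bounds $\kappa\le\varepsilon_0^2/8$ and $\kappa\le 2c\,\varepsilon_0$, read off from the definition of $\kappa$. These give, uniformly in $c$, $\sqrt{2\kappa}\le\varepsilon_0/2$ and $2\kappa/c\le\min(\varepsilon_0^2/(4c),\,4\varepsilon_0)$. It then plugs in $\varepsilon_0\le 4\sigma^2/c$ and $\varepsilon_0\le 4\sigma$. This yields $\varepsilon_0/2+\varepsilon_0^2/(4c)\le 2\sigma^2/c+4\sigma^2/c$ on one side and $\varepsilon_0/2+4\varepsilon_0\le 18\sigma$ on the other.

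The step you flag as delicate in the regime $c\le\sigma/4$, namely $\sqrt{2\kappa}\le 2\sigma$, is just this uniform estimate $\sqrt{2\kappa}\le\varepsilon_0/2$. Likewise, the failure you describe for a two-case split disappears once $2\kappa/c$ is bounded by the minimum of both expressions rather than by $16\sigma$ alone. So the paper's argument is shorter and requires no choice of breakpoints.

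What your explicit computation buys is sharpness information. For any $\sigma>0$, equality holds at $c=\sigma$, where both sides equal $6\sigma$, and at $c=\sigma/4$, where both sides equal $18\sigma$. Hence neither constant in $6\min(\sigma^2/c,3\sigma)$ can be lowered within this lemma, which the paper's case-free argument does not reveal.
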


\begin{proof}
  By direct simplification,
  $2 \kappa / c \leq \min(\frac{\varepsilon_0^2}{4c}, 4 \varepsilon_0)$
  and $\sqrt{2 \kappa} \leq \varepsilon_0 / 2$.

  Thus $\sqrt{2 \kappa} + \frac{2 \kappa}{c} \leq \frac{\varepsilon_0}{2} + \frac{\varepsilon_0^2}{4c}
  \leq \frac{1}{2} \frac{4 \sigma^2}{c} + \frac{(4 \sigma)^2}{4 c} = 6 \frac{\sigma^2}{c} $.
  Additionally,
  $\sqrt{2 \kappa} + \frac{2 \kappa}{c}
  \leq \frac{\varepsilon_0}{2} + 4 \varepsilon_0 \leq \frac{9}{2} \varepsilon_0 \leq \frac{9}{2} 4 \sigma = 18 \sigma $
\end{proof}

\newpage

\begin{lemma}\label{lem:large-mean-control}
  If $\lVert \mu \rVert_2 \geq c$
  then $\lVert \P \mu - \mathbb{E}[ \P X ] \rVert_2^2 \leq \dfrac{c^3}{2 \lVert \mu \rVert_2} + \dfrac{c^2 \sigma^2}{\lVert \mu \rVert_2^2}$
\end{lemma}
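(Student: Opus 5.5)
The plan is to reduce the statement to a pointwise inequality and then take expectations. Write $m = \lVert \mu \rVert_2 \geq c$, $u = \mu / m$, so that $\P \mu = c u$, and set $e = X - \mu$, $D = \lVert e \rVert_2$. By Jensen's inequality (convexity of $\lVert \cdot \rVert_2^2$),
$$ \lVert \P \mu - \mathbb{E}[\P X] \rVert_2^2 \leq \mathbb{E}\big[ \lVert c u - \P X \rVert_2^2 \big]. $$
Since $\mathbb{E}[D^2] \leq \sigma^2$, it therefore suffices to prove, for every realization of $X$,
$$ \lVert c u - \P X \rVert_2^2 \leq \frac{c^3}{2 m} + \frac{c^2 D^2}{m^2}. $$
This is linear in $D^2$, so taking expectations directly gives the lemma.

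To prove the pointwise bound I would use $\lVert \P X \rVert_2 \leq c$ and expand:
$$ \lVert c u - \P X \rVert_2^2 = c^2 - 2c\, u \cdot \P X + \lVert \P X \rVert_2^2 \leq 2c\,(c - u \cdot \P X). $$
The key quantity is then $c - u \cdot \P X$, which I would control through the angular deviation of $X$ from $u$. I will decompose $e = e_\parallel u + e_\perp$, so that $u \cdot X = m + e_\parallel$ and $\lVert X \rVert_2^2 = (u \cdot X)^2 + \lVert e_\perp \rVert_2^2$.

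The cases are as follows.
\begin{itemize}
\item If $\lVert X \rVert_2 \geq c$ and $u \cdot X \geq 0$: here $c - u \cdot \P X = c(\lVert X \rVert_2 - u\cdot X)/\lVert X \rVert_2$. Using $\lVert X \rVert_2 - u \cdot X = \lVert e_\perp \rVert_2^2 / (\lVert X \rVert_2 + u \cdot X) \leq D^2/\lVert X \rVert_2$, this is at most $c D^2/\lVert X \rVert_2^2$. When $\lVert X \rVert_2$ is comparable to $m$, this gives the $c^2 D^2/m^2$ term.
\item If $\lVert X \rVert_2$ is much smaller than $m$, then $D \geq m - \lVert X \rVert_2$ is of order $m$. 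In that sub-case I would instead use the crude bound $\lVert c u - \P X \rVert_2^2 \leq 4c^2$. I would split at a threshold such as $D \geq m/2$ versus $D < m/2$, the latter giving $\lVert X \rVert_2 \geq m/2$.
\item If $\lVert X \rVert_2 < c \leq m$, then $D \geq m - c$, and $c - u \cdot X \leq c - m + D \leq D$.
\item If $u \cdot X < 0$, then $D \geq m$, which is the crude-bound regime.
\end{itemize}
In each regime the target is $2c(c - u\cdot \P X)$ bounded by a combination of $c^2 D^2/m^2$ and $c^3/(2m)$. The mixed terms, such as $c^2 D/m$ arising from the case $\lVert X \rVert_2 < c$, would be handled by Young's inequality in the form
$$ 2 c^2 \frac{D}{m} \leq \frac{c^3}{2m} + \frac{2 c\, D^2}{m}, $$
combined with $c \leq m$ to convert $c D^2/m$ into a multiple of $c^2 D^2/m^2$. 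This absorption is the source of the $c^3/(2m)$ term.

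The main obstacle is the constants. The crude regime ($D \gtrsim m$, where the bound $4c^2$ must fit under $c^2 D^2/m^2 + c^3/(2m)$) and the intermediate regime ($\lVert X \rVert_2$ between $c$ and $m/2$) do not obviously close with exactly the coefficients $1/2$ and $1$. My first attempt would be to optimize the split threshold between the regimes. If a constant is still off, I would avoid bounding $\lVert \P X \rVert_2^2$ by $c^2$ pointwise. Instead I would bound $\lVert \mathbb{E}[\P X] \rVert_2^2$ directly via its component along $u$ and its orthogonal part, $\lVert \mathbb{E}[\P X]_\perp \rVert_2 \leq \mathbb{E}[c\lVert e_\perp \rVert_2/\lVert X \rVert_2]$. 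This gives the squared bias as the sum of a parallel deficit and an orthogonal part, each estimated separately, which should recover the stated constants.
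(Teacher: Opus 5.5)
\textbf{There is a genuine gap: the Jensen reduction is too lossy, so no case split can close the argument.}

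Your very first step already fails. The pointwise inequality $\lVert cu - \P X\rVert_2^2 \le \frac{c^3}{2m} + \frac{c^2 D^2}{m^2}$ is false. For the realization $X = -cu$ you get $\lVert cu - \P X\rVert_2^2 = 4c^2$, while $D = m+c$, so the right-hand side is only $1.26\,c^2$ when $m = 10c$, and it tends to $c^2$ as $m/c \to \infty$.

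The averaged inequality you obtain after Jensen is also false. Take $X = -cu$ and $X = (2m+c)u$, each with probability $1/2$. Then $\mathbb{E}[X] = \mu$, $\sigma^2 = (m+c)^2$ and $\P X = \pm cu$. So $\mathbb{E}\lVert \P\mu - \P X\rVert_2^2 = 2c^2$, while the target $\frac{c^3}{2m} + \frac{c^2\sigma^2}{m^2}$ equals $1.26\,c^2$ for $m = 10c$. The lemma itself holds here: $\mathbb{E}[\P X] = 0$, so the true squared bias is $c^2$.

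The discrepancy is exactly $V := \mathbb{E}\lVert \P X - \mathbb{E}[\P X]\rVert_2^2 = c^2$, because $\mathbb{E}\lVert \P\mu - \P X\rVert_2^2 = \lVert \P\mu - \mathbb{E}[\P X]\rVert_2^2 + V$. Jensen discards precisely this variance term, and it is of the same order as the target. The same example also defeats your step $\lVert cu - \P X\rVert_2^2 \le 2c\,(c - u\cdot \P X)$ taken in expectation, which gives $2c^2$. Your fallback in the last paragraph does avoid Jensen, but it is only sketched, and it does not say how the lost variance would be recovered.

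\textbf{The missing idea is to keep $V$ and let it cancel the noise term.} This is what the paper does.
\begin{itemize}
\item Write $\lVert \mathbb{E}[\P X]\rVert_2^2 = \mathbb{E}\lVert \P X\rVert_2^2 - V \le c^2 - V$. This gives $\lVert \P\mu - \mathbb{E}[\P X]\rVert_2^2 \le 2c^2 - V - 2\,\P\mu\cdot\mathbb{E}[\P X]$.
\item Lower-bound the correlation through the centered covariance: $\mu\cdot\mathbb{E}[\P X] = \mathbb{E}[X\cdot \P X] - \mathbb{E}[(X-\mu)\cdot(\P X - \mathbb{E}[\P X])]$.
\item The first term is at least $c\lVert\mu\rVert_2 - c^2/4$. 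This uses the pointwise bound $v\cdot \P v \ge c\lVert v\rVert_2 - c^2/4$ together with $\mathbb{E}\lVert X\rVert_2 \ge \lVert\mu\rVert_2$.
\item The second term is at most $\sigma\sqrt{V}$ by Cauchy--Schwarz.
\item Substituting $\P\mu = (c/\lVert\mu\rVert_2)\,\mu$ leaves $\frac{c^3}{2\lVert\mu\rVert_2} + \frac{2c\sigma}{\lVert\mu\rVert_2}\sqrt{V} - V$.
\item The inequality $as - s^2 \le a^2/4$, applied with $s = \sqrt{V}$, gives exactly the stated bound.
\end{itemize}
The $-V$ term absorbs the noise contribution. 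A pointwise-then-average argument cannot capture this, because the cancellation only appears after taking the expectation of $\P X$.
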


\begin{proof}
  Define $V := \mathbb{V}[\P X] = \mathbb{E}[ \lVert \P X \rVert_2^2 ] - \lVert \mathbb{E}[ \P X ] \rVert_2^2$.
  Observe that $\lVert \P X \rVert_2 \leq c$ implies $\lVert \mathbb{E}[\P X] \rVert_2^2 \leq c^2 - V$.
  $$\begin{aligned}
    \lVert \P \mu - \mathbb{E} \P X \rVert_2^2 &= \lVert \P \mu \rVert_2^2 - 2 (\P \mu) \cdot \mathbb{E}[ \P X] + \lVert \mathbb{E}[ \P X ] \rVert_2^2
    \leq 2 c^2 - V - 2 (\P \mu) \cdot \mathbb{E}[\P X ]
  \end{aligned}$$
  In order to bound $(\P \mu) \cdot \mathbb{E}[ \P X ]$ below by a correlation argument,
  let us first note that for all $v \in \mathbb{R}^d$ it holds
  $(v \cdot \P v) = \lVert v \Vert_2 \min(\lVert v \rVert_2, c) \geq c \lVert v \rVert_2 - c^2 / 4$.
  This is immediate if $\lVert v \rVert_2 \geq c$ because $v \cdot \P v = c \lVert v \rVert_2$.
  Otherwise $\lVert v \rVert_2 \leq c$ implies
  $\lVert v \rVert_2^2 - c \lVert v \rVert_2 + c^2 / 4 = (\lVert v \rVert_2 - c/2)^2 \geq 0$
  and the result follows by rearraging.

  By a careful rearrangement using $\mathbb{E}[ X - \mu ] = 0$,
  followed by a Cauchy-Schwarz bound, we get
  $$\begin{aligned}
    \mu \cdot \mathbb{E}[ \P X]
    &= \mathbb{E}[ \mu \cdot \P X ]
    = \mathbb{E}[ X \cdot \P X] - \mathbb{E}[ (X - \mu) \cdot \P X ]
    \\ &= \mathbb{E}[ X \cdot \P X] - \mathbb{E}[ (X - \mu) \cdot (\P X - \mathbb{E}[ \P X]) ]
    &\geq \left(c \lVert \mu \rVert_2 - c^2 / 4 \right) - \sqrt{\sigma^2 \cdot V}
  \end{aligned}$$
  Moreover, $\lVert \mu \rVert_2 \geq c$ implies that $\P \mu = (c / \lVert \mu \rVert_2) \, \mu$,
  hence reinjecting this into the previous inequality
  $$\begin{aligned}
    \lVert \P \mu - \mathbb{E}[ \P X ] \rVert_2^2 &\leq 2 c^2 - V - 2 (\P \mu) \cdot \mathbb{E}[ \P X]
    \\ &\leq 2 c^2 - V - \frac{2 c}{\lVert \mu \rVert_2} \left( c \lVert \mu \rVert_2 - \frac{c^2}{4} - \sigma \sqrt{V} \right)
    \\ &\leq \frac{c^2}{2 \lVert \mu \rVert_2} + \frac{2 c \sigma}{\lVert \mu \rVert_2} \sqrt{V} - V
    \leq \frac{c^2}{2 \lVert \mu \rVert_2} + \frac{c^2 \sigma^2}{\lVert \mu \rVert_2^2}
  \end{aligned}$$
  where the last step is the maximization
  $\forall s \geq 0, \> a s - s^2 \leq a^2 / 4$ (attained at $s = a /2$),
  used with $s = \sqrt{V}$.
\end{proof}

\begin{lemma}\label{lem:unconditional-control}
  $\lVert \P \mu - \mathbb{E}[ \P X ] \rVert_2 \leq \sigma / 2$.
\end{lemma}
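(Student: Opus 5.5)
The plan is to rewrite the bias in terms of the complementary operator $Q := \mathrm{Id} - \P$ and to exploit that $Q$ is \emph{firmly nonexpansive}. Plain $1$-Lipschitzness of $\P$ only gives $\lVert \P \mu - \mathbb{E}[\P X] \rVert_2 \leq \mathbb{E}\lVert X - \mu \rVert_2 \leq \sigma$. The factor $1/2$ has to come from a variance trade-off similar to the one at the end of \Lemma{large-mean-control}.

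First, since $\mathbb{E}[X] = \mu$, we have $\P\mu - \mathbb{E}[\P X] = \mathbb{E}[X - \P X] - (\mu - \P\mu) = \mathbb{E}[QX - Q\mu]$. Write $D := QX - Q\mu$ and $Y := X - \mu$, so that $\mathbb{E}[Y] = 0$ and $\mathbb{E}\lVert Y \rVert_2^2 \leq \sigma^2$. The target is $\lVert \mathbb{E}[D] \rVert_2^2 \leq \sigma^2/4$.

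Second, I use the standard fact that $\P$, being the $\ell_2$-projection onto a closed convex set, is firmly nonexpansive: $\lVert \P u - \P v \rVert_2^2 \leq (u-v)\cdot(\P u - \P v)$. Then $Q = \mathrm{Id} - \P$ is firmly nonexpansive as well. Indeed, expanding $\lVert (u - v) - (\P u - \P v) \rVert_2^2$ and applying the inequality for $\P$ yields $\lVert Qu - Qv \rVert_2^2 \leq (u - v)\cdot(Qu - Qv)$. This could alternatively be justified from the characterization $(u - \P u)\cdot(w - \P u) \leq 0$ for $w$ in the ball. Applied pointwise with $u = X$ and $v = \mu$ and taken in expectation, this gives $\mathbb{E}\lVert D \rVert_2^2 \leq \mathbb{E}[Y \cdot D]$.

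Third, I center $D$ using $\mathbb{E}[Y] = 0$, exactly as in the correlation argument of \Lemma{large-mean-control}: $\mathbb{E}[Y\cdot D] = \mathbb{E}[Y \cdot (D - \mathbb{E}D)] \leq \sigma \sqrt{\mathbb{E}\lVert D \rVert_2^2 - \lVert \mathbb{E} D \rVert_2^2}$ by Cauchy–Schwarz. Setting $a := \mathbb{E}\lVert D \rVert_2^2$ and $m := \lVert \mathbb{E} D \rVert_2^2 \leq a$, this reads $a \leq \sigma\sqrt{a - m}$. Squaring gives $m \leq a - a^2/\sigma^2$, and maximizing $s - s^2/\sigma^2$ over $s \geq 0$ gives $m \leq \sigma^2/4$. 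The degenerate case $\sigma = 0$ is trivial, since then $X = \mu$ almost surely.

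The main obstacle is the second step: one must recognize that firm nonexpansiveness of $Q$ is the right property, rather than mere nonexpansiveness of $\P$. Once the inequality $\mathbb{E}\lVert D \rVert_2^2 \leq \mathbb{E}[Y\cdot D]$ is in hand, the rest is the same quadratic maximization already used in \Lemma{large-mean-control}.
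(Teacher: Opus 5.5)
Your proof is correct. It rests on the same pointwise inequality as the paper's, but it passes to expectations differently.

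Firm nonexpansiveness of $Q = \mathrm{Id} - \P$, i.e.\ $\lVert D \rVert_2^2 \leq Y \cdot D$, is algebraically identical to $\lVert D - \tfrac{1}{2} Y \rVert_2^2 \leq \tfrac{1}{4} \lVert Y \rVert_2^2$. Moreover $D - \tfrac12 Y = \P\mu - \P X - \tfrac12(\mu - X)$ is exactly the variable $Z$ the paper obtains from firm nonexpansiveness of $\P$ itself.

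The paper completes the square pointwise and observes that $\mathbb{E}[D - \tfrac12 Y] = \mathbb{E}[D]$ since $\mathbb{E}[Y] = 0$. A single Jensen step, $\lVert \mathbb{E} D \rVert_2^2 \leq \mathbb{E}\lVert D - \tfrac12 Y \rVert_2^2 \leq \sigma^2/4$, then finishes. You instead take expectations first and recover the factor $1/2$ through centering, Cauchy--Schwarz and the quadratic maximization from \Lemma{large-mean-control}.

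The paper's route is shorter and needs no separate treatment of $\sigma = 0$. Your route is also sound: integrability is automatic since $\lVert D \rVert_2 \leq \lVert Y \rVert_2$. It produces the by-product relation $\lVert \mathbb{E} D \rVert_2^2 \leq a - a^2/\sigma^2$, which links the bias to the second moment of $D$ and makes the parallel with \Lemma{large-mean-control} explicit.
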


\begin{proof}
  Using firm non-expansiveness of projection
  $(\P \mu - \P X) \cdot (\mu - X) \geq \lVert \P \mu - \P X \rVert_2^2$,
  (see \Lemma{non-expansiveness})
  $$\begin{aligned}
    \left\lVert \P \mu - \P X - \frac{\mu - X}{2} \right\rVert_2^2
    &= \left\lVert \P \mu - \P X  \right\rVert_2^2 - (\P \mu - \P X) \cdot (\mu - X) + \frac{1}{4} \lVert \mu - X \rVert_2^2
    \leq \frac{1}{4} \lVert \mu - X \rVert_2^2
  \end{aligned}$$
  Hence for
  $Z = \P \mu - \P X - \frac{1}{2} (\mu - X)$,
  Jensen inequality gives
  $\lVert \mathbb{E}[Z] \rVert_2^2 \leq \mathbb{E}[ \lVert Z \rVert_2^2 ] = \sigma^2 / 4$,
  thus
  \begin{align*}
    \left\lVert \P \mu - \mathbb{E}[\P X] \right\rVert_2
    = \left\lVert \mathbb{E}\left[ \P \mu - \P X - \frac{\mu - X}{2} \right] \right\rVert_2
    \leq \frac{\sigma}{2}
    &\qedhere
  \end{align*}
\end{proof}

\begin{lemma}\label{lem:small-mean-control}
  If $\lVert \mu \rVert_2 < c$
  then $\lVert \P \mu - \mathbb{E}[ \P X ] \rVert_2 \leq \dfrac{\sigma^2}{4 (c - \lVert \mu \rVert_2)}$.
\end{lemma}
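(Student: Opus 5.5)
Since $\lVert \mu \rVert_2 < c$, we have $\P \mu = \mu = \mathbb{E}[X]$. So the bias rewrites as $\P \mu - \mathbb{E}[\P X] = \mathbb{E}[X - \P X]$. The plan is to bound this by Jensen (triangle inequality for expectations), $\lVert \mathbb{E}[X - \P X] \rVert_2 \leq \mathbb{E}[\lVert X - \P X \rVert_2]$, and then control the integrand pointwise in terms of $\lVert X - \mu \rVert_2^2$, whose expectation is at most $\sigma^2$.

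\textbf{Pointwise computation.} By definition of $\P$, we have $X - \P X = \left(1 - \min(1, c/\lVert X \rVert_2)\right) X$, so $\lVert X - \P X \rVert_2 = (\lVert X \rVert_2 - c)_+$. By the triangle inequality, $\lVert X \rVert_2 \leq \lVert \mu \rVert_2 + \lVert X - \mu \rVert_2$. Writing $r := c - \lVert \mu \rVert_2 > 0$ and $s := \lVert X - \mu \rVert_2$, and using that $(\cdot)_+$ is monotone, this gives
$$ \lVert X - \P X \rVert_2 \leq (s - r)_+ . $$

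\textbf{Quadratic majorant.} Next I would show that $(s - r)_+ \leq s^2 / (4r)$ for all $s \geq 0$. When $s \leq r$ the left side is zero, so the inequality is trivial. Otherwise the inequality $s - r \leq s^2/(4r)$ is equivalent to $(s - 2r)^2 \geq 0$, mirroring the completing-the-square trick used in \Lemma{large-mean-control}. Taking expectations then yields
$$ \lVert \P \mu - \mathbb{E}[\P X] \rVert_2 \leq \mathbb{E}\!\left[\frac{\lVert X - \mu \rVert_2^2}{4 r}\right] \leq \frac{\sigma^2}{4 (c - \lVert \mu \rVert_2)} . $$

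There is no real obstacle here. The only points needing care are the identity $\P \mu = \mu$, which is what lets the bias be written as $\mathbb{E}[X - \P X]$, and the choice of the tangent-parabola majorant $s^2/(4r)$ of the hinge $(s - r)_+$, which is the best quadratic bound of the form $a s^2$.
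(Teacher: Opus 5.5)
Your proof is correct and follows the paper's argument step for step. It uses $\P\mu = \mu$, then Jensen to move the norm inside the expectation, then the identity $\lVert X - \P X\rVert_2 = (\lVert X\rVert_2 - c)_+$ with the triangle inequality, and finally the majorant $(s-\tau)_+ \le s^2/(4\tau)$ obtained from $(s-2\tau)^2 \ge 0$.
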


\begin{proof}
  Observe that for all $v \in \mathbb{R}^d$, it holds $\lVert v - \P v \rVert_2 = (\lVert v \rVert_2 - c)_+$.

  Additionally, let us show for any $\tau > 0$ that $\forall s \geq 0, \> (s - \tau)_+ \geq s^2 / (4 \tau)$.
  If $s \leq \tau$, the statement is trivial. If $s \geq \tau$, then observe that
  $s^2 - 4 \tau s + 4 \tau^2 = (s - 2 \tau)^2 \geq 0$, thus rearraging
  $s^2 / (4 \tau) \geq s - \tau = (s - \tau)_+$.

  Therefore, when $\lVert \mu \rVert_2 < c$ and thus $\P \mu = \mu$, we get
  with Jensen inequality and the above two bounds that
  \begin{align*}
  \lVert \mu - \mathbb{E} \P X \rVert_2
    &= \lVert \mathbb{E}[ X - \P X ] \rVert_2
    \leq \mathbb{E}[ \lVert X - \P X \rVert_2 ]
    \leq \mathbb{E}[ ( \lVert X \rVert_2 - c )_+ ]
    \\ &
    \leq \mathbb{E}[ ( \lVert X- \mu \rVert_2 - (c - \lVert \mu \rVert_2))_+ ]
    \leq \mathbb{E}\left[ \frac{\lVert X - \mu \rVert_2^2}{4 (c - \lVert \mu \rVert_2)} \right]
    \leq \frac{\sigma^2}{4 (c - \lVert \mu \rVert_2)}
    \qedhere
  \end{align*}
\end{proof}

\begin{lemma}\label{lem:non-expansiveness}
  Projection is firmly non expansive, i.e.
  $ \> \forall (x,z), \>\> (\P x - \P z) \cdot (x - z) \geq \lVert \P x - \P z \rVert_2^2 $
\end{lemma}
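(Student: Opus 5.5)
The approach is to recognize $\P$ as the metric projection onto the closed convex ball $B = \{ y \mid \lVert y \rVert_2 \leq c \}$. Firm non-expansiveness then follows from the variational characterization of the projection. That characterization says: for any $x$ and any $y \in B$, $(x - \P x) \cdot (y - \P x) \leq 0$. I would first justify this directly for the ball.

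\begin{itemize}
  \item If $\lVert x \rVert_2 \leq c$, then $x - \P x = 0$ and there is nothing to prove.
  \item Otherwise $\P x = c\, x / \lVert x \rVert_2$ and $x - \P x = (\lVert x \rVert_2 - c)\, x / \lVert x \rVert_2$. The inequality then reduces to
  $$\frac{x}{\lVert x \rVert_2} \cdot y \leq c = \frac{x}{\lVert x \rVert_2} \cdot \P x,$$
  which holds by Cauchy--Schwarz since $\lVert y \rVert_2 \leq c$. The prefactor $\lVert x \rVert_2 - c$ is positive, so the sign is preserved.
\end{itemize}

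Next, apply this characterization twice. Take the point $x$ with $y = \P z \in B$, and the point $z$ with $y = \P x \in B$:
$$(x - \P x) \cdot (\P z - \P x) \leq 0, \qquad (z - \P z) \cdot (\P x - \P z) \leq 0.$$
Summing the two gives $\big( (x - z) - (\P x - \P z) \big) \cdot (\P z - \P x) \leq 0$. Rearranging yields exactly $(\P x - \P z) \cdot (x - z) \geq \lVert \P x - \P z \rVert_2^2$.

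The only step needing care is the variational inequality for the explicit radial formula of $\P$. With the case split above it is a one-line Cauchy--Schwarz argument, so I expect no real obstacle.
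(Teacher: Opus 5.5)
Your proposal is correct and follows the paper's proof. Both use the variational (normal cone) inequality $(x - \P x)\cdot(y - \P x) \leq 0$ for $y$ in the ball, instantiate it at $y = \P z$ and symmetrically at $y = \P x$, and add the two; the only difference is that you verify this inequality directly for the ball via the radial formula and Cauchy--Schwarz, whereas the paper invokes it as the first-order optimality condition of projection onto a convex set.
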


\begin{proof}
  $\P x$ minimizes $(y \mapsto \frac{1}{2} \lVert x - y \rVert_2^2)$ over
  $C := \{ y \in \mathbb{R}^d \mid \lVert y \rVert_2 \leq c \}$,
  thus $(x - \P x) \cdot (y - \P x) \leq 0, \forall y \in C$ (normal cone condition).
  Picking $y = \P z$ we obtain the following inequality (and the other by symmetry)
  $$ (x - \P x) \cdot (\P z - \P x) \leq 0 \quad\text{and}\quad
  (z - \P z) \cdot (\P x - \P z) \leq 0$$
  Adding the two, we get
  $ ((x - z) - (\P x - \P z)) \cdot (\P z - \P x) \leq 0 $
  hence
  $ \lVert \P z - \P x \rVert_2^2  \leq (z - x) \cdot (\P z - \P x)$.
\end{proof}

\end{document}

%% file: math_commands.tex
\usepackage{amsmath,amssymb,amsfonts,bm}

\def\eqref#1{equation~\ref{#1}}

\def\1{\bm{1}}

\DeclareMathAlphabet{\mathsfit}{\encodingdefault}{\sfdefault}{m}{sl}
\SetMathAlphabet{\mathsfit}{bold}{\encodingdefault}{\sfdefault}{bx}{n}

\newcommand{\E}{\mathbb{E}}

